\documentclass[10pt, conference, compsocconf]{IEEEtran}

\usepackage{times}
\usepackage{graphicx}
\usepackage{subfigure} 
\usepackage{algorithm}
\usepackage{algorithmic}
\usepackage{todonotes}
\usepackage{amsmath}
\usepackage{amsfonts}
\usepackage{amsthm}
\usepackage[hyphens]{url}
\usepackage{hyperref}
\usepackage[all]{hypcap}

\DeclareMathOperator{\ReLU}{ReLU}

\newcommand{\vagg}{1.25}
\newcommand{\agg}{1.5}
\newcommand{\nor}{1.75}
\newcommand{\con}{2}
\newcommand{\vcon}{2.5}

\newtheorem{theorem}{Theorem}

\newtheorem{proposition}[theorem]{Proposition}

\begin{document}

\title{Adaptive Neuron Apoptosis for Accelerating Deep Learning on Large Scale Systems}

\author{\IEEEauthorblockN{Charles Siegel, Jeff Daily, Abhinav Vishnu}
\IEEEauthorblockA{
Pacific Northwest National Laboratory\\
Richland, WA 99352\\
charles.siegel@pnnl.gov, jeff.daily@pnnl.gov, abhinav.vishnu.pnnl.gov}
}

\maketitle

\begin{abstract}
We present novel techniques to accelerate the convergence
of Deep Learning algorithms by conducting low overhead removal of
redundant neurons -- {\em apoptosis} of neurons -- which do not
contribute to model learning, during the training phase itself. We
provide in-depth theoretical underpinnings of our heuristics (bounding
accuracy loss and handling apoptosis of several neuron types), and
present the methods to conduct adaptive neuron apoptosis.
Specifically, we are able to improve the training time for several datasets by 2-3x,
while reducing the number of parameters by up to 30x (4-5x on average) on
datasets such as ImageNet classification.  For the Higgs Boson dataset,
our implementation improves the accuracy (measured by Area Under Curve
(AUC)) for classification from 0.88/1 to 0.94/1, while reducing the
number of parameters by 3x in comparison to existing literature. The
proposed methods
achieve a 2.44x speedup in comparison to the default (no apoptosis)
algorithm.
\end{abstract}

\IEEEpeerreviewmaketitle

\section{Introduction}
Deep Learning algorithms emulate computation structure of a brain by
learning models using {\em neurons} and their interconnections ({\em
synapses, also known as parameters/weights})~\cite{dean:nips12}. 
Using a cascade of neurons, Deep Learning algorithms are known to learn
complex non-linear functions.  These functions can be applied to both
{\em supervised} (input dataset with ground truth labels) and {\em unsupervised} (input data with no labels)
problems.
Naturally, Deep Learning algorithms
are being applied to several domains including Computer
Vision~\cite{imagenet}, Speech
Recognition~\cite{graves:nn05}, and High Energy Physics~\cite{sadowski:nips14}.

An important aspect of Deep Learning algorithms is the topology of a
{\em Neural Network} (used interchangeably with Deep Learning with rest
of the paper). A candidate topology may have a single input and an
output layer, with possibly several {\em hidden layers}. Convolutional
Neural Networks (CNN) -- a class of Deep Learning algorithms -- may have
several {\em convolutional layers}, followed by several {\em
fully-connected layers}. In practice, the neurons and synapses are
implemented by using matrices, where each row/column represents a neuron
and each element represents the strength (weight) of a synapse. The
output of a neural network is the weight matrices, which may be used for
Machine Learning tasks such as classification, or clustering. 

Usually, a neural network topology is user-specified, which includes the
number of hidden layers and number of neurons in each layer (an example
is shown in Figure~\ref{fig:example_dnn} (a)). {\em Deeper neural
networks} (with more layers)  are used for model generation from
increasingly complex datasets, possibly learning complex non-linear
functions. Bigger networks -- which have larger number of neurons per
layer -- may also be used in addition to deeper networks for this
purpose. 

However, deeper and/or bigger networks do not necessarily provide better
models.  With increasing number of network {\em parameters} (the overall
number of synapses), the training time per epoch increases
significantly~\cite{hinton:corr15,2016arXiv160302339V}. Deeper networks
tend to suffer from problems such as vanishing
gradients~\cite{vanishinggradient}, where the weights of network
parameters change slowly. In addition, deeper networks are known to
cause {\em overfitting} -- a scenario in which the model has learned very
well from the training set, but does not generalize well on new samples.
This scenario generally occurs when a neural network has learned a
significantly more complex function than implied by the training set.  A
few possible solutions such as Dropout~\cite{dropout} exist -- but they
do not reduce the overall time to solution. Lastly, larger number of
parameters have prohibitive storage and computational requirements
during the testing phase (when the model is applied on new data) --
which is problematic for deployment on power/memory constrained devices. 

\begin{figure*}[htbp]
		\begin{minipage}[t]{2.0\columnwidth}
				\centering
\includegraphics[width=0.7\columnwidth]{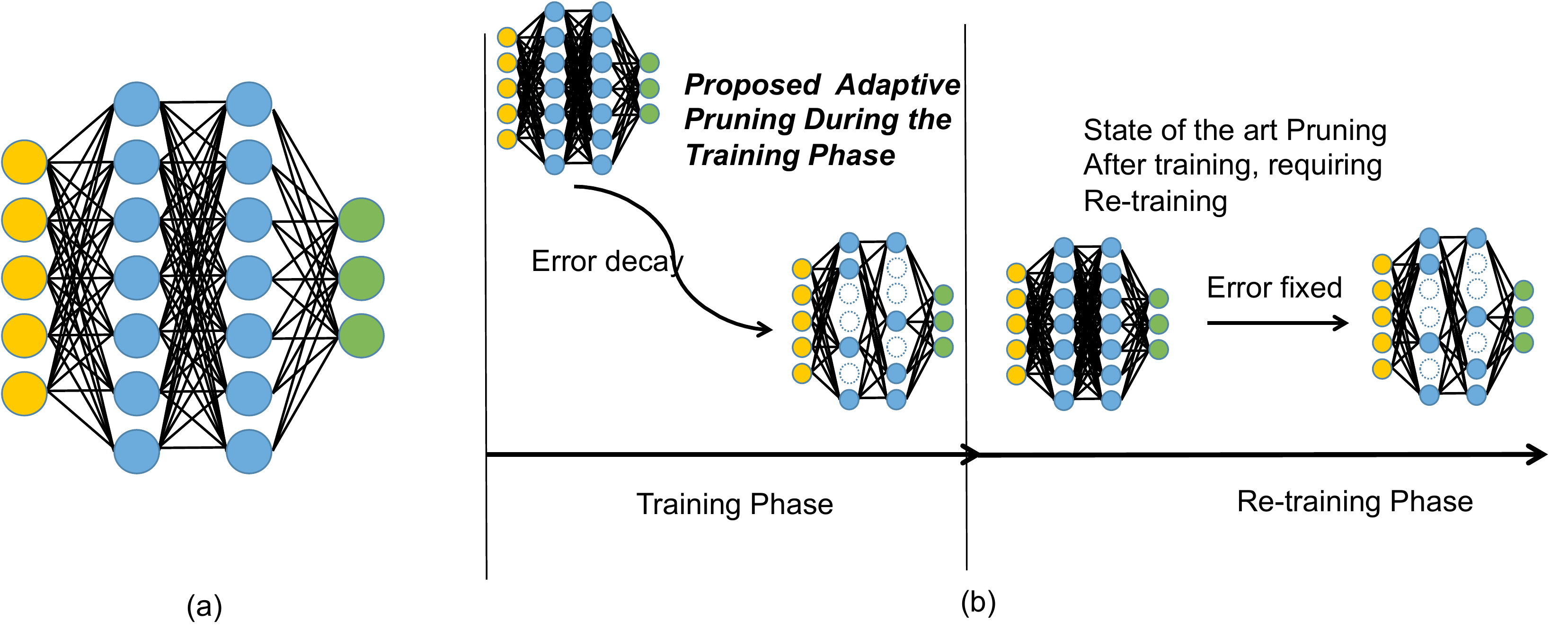}
\caption{(a) An example of a Deep Neural Network (DNN) with two hidden
layers (color coded - blue) (b) Comparison of our approach (adaptive
apoptosis during the training phase) to state of the art approaches,
which conduct removal of non-contributing synapses after training is
over. {\bf Unlike state of the art approaches, the proposed approach does not require re-training} }
\label{fig:example_dnn}
				\centering
		\end{minipage}
\end{figure*}

A possible solution to this problem is to prune the neural network.
Usually, this is conducted after the training is completed by removing
{\em unnecessary} weights (and possibly)
neurons~\cite{yang:iccv15,han:nips15,squeezenet}.  After pruning, the
network is re-trained to stabilize the parameters~\cite{han:nips15} (The
scenario is shown on the right of Figure~\ref{fig:example_dnn}(b)).
Usually pruning implies that unnecessary parameters (and in a few cases
neurons) may be removed, without accuracy loss.

However, there are several shortcomings of existing approaches: 1)
Re-training is a time-consuming process. As an example, Han {\em et
al.} report a slowdown of up to 2.5x when re-training after
pruning~\cite{han:nips15}.  
This is especially problematic for large datasets, where
increasing the training time is unattractive. 2) Another problem with
the current approaches is the lack of theoretical underpinnings for
network pruning. Usually, this is not required for state of the art
networks, because they use the final network as starting point for
re-training. 
After each sub-step of pruning and re-training, the accuracy is compared
against the reference network -- to ensure no empirical loss of accuracy.  
However, this approach is not always efficient, since this exploration
is is expensive in time due to several re-training steps.

Our objective in this paper is to remove unnecessary neurons (and hence
synapses) for Deep
Learning algorithms
by {\em adaptively}
removing redundant neurons -- neuron {\em apoptosis} -- during the
training phase itself, while achieving similar accuracy as achieved
without apoptosis using the original neural network. Unlike existing
approaches, our adaptive apoptosis approach reduces the overall training time --
which makes it a very attractive solution for reducing the
computational/storage requirements, while gaining speedup during the
training phase itself.

\subsection{Contributions:}
Specifically, we make the following contributions in this paper:

\begin{itemize}
		\item We propose novel heuristics to adaptively conduct neuron
				apoptosis during the training phase. We provide an
				in-depth discussion of the point of initial apoptosis,
				subsequent apoptosis and degree of apoptosis.
				Our heuristics rely on the intuition of
				the basic structure of the loss function (independent of the
				dataset), in addition to heuristics, which consider
				linear apoptosis.
		\item We provide theoretical underpinnings of our proposed
				solution. These are intended to bound the loss of accuracy incurred by
				neuron apoptosis and address challenges of considering
				input/output synapses to neuron types for apoptosis.
				Unlike existing literature -- where
				neuron pruning is executed after training to prevent
				accuracy loss -- this is a
				critical step to ensure the correctness of proposed
				heuristics. 
		\item We extend Caffe to use MPI (similar to FireCaffe~\cite{firecaffe}) -- so that the execution may be
				conducted on supercomputers, and other large scale
				systems (such as cloud computing systems). 
				With these
				extensions, our implementations is able to utilize large
				scale clusters using native implementations with
				multi-core systems and accelerators such as GPUs.
		\item We evaluate our proposed implementations with two clusters
				-- one connected with Intel Haswell and InfiniBand, and
				other connected with nVIDIA GPUs and InfiniBand. We use
				several large datasets for evaluation using multiple
				nodes on each cluster. Our evaluation indicates a
				reduction of up to 30x in overall parameters, a speedup
				of 2-3x -- unlike existing approaches which cause
				slowdown.
\end{itemize}
A very important science contribution of our paper is the improvement in
classification accuracy of Higgs Boson Dataset (represented by Receiver Operating Curve (ROC) -
Area Under Curve (AUC))
published in the literature by Sadowski {\em et
al.}~\cite{sadowski:nips14}. We are able to improve the AUC by 6
percentage points (from 0.88/1 to 0.94/1), while reducing the number of
parameters by {\bf 3x} and obtaining a speedup of {\bf 2.44x} in
comparison to the default (no apoptosis) algorithm.
Another important  artifact of our approach is the reduction in space and
computational requirements of the neural networks (up to {\bf 30x}), which can be realized
without incurring a penalty in training time.

The rest of the paper is organized as follows: In
section~\ref{sec:related}, we present related work on neural network
topologies. In section~\ref{sec:fundamentals}, we provide a brief
introduction to neural networks and Google TensorFlow. In
section~\ref{sec:apopspace}, we present a solution space to the problem of
neuron apoptosis, possible design choices, heuristics and perceived
benefits.  We also present theoretical underpinnings of our proposed
solutions (section~\ref{sec:proof}), and provide a proof on bounds to
accuracy loss. We present detailed performance evaluation in
section~\ref{sec:exp} and present conclusions in
section~\ref{sec:conclusions}.

\section{Related Work}
\label{sec:related}
We split the related work section on Deep Learning
algorithms and implementations in research on large scale systems and
pruning/compression algorithms.

\subsection{Systems (Multi-core/Many-core/Large Scale) Research}
The most widely used algorithm for training Deep Learning algorithms is
batch gradient descent. Several implementations of batch gradient
descent methods are available for sequential, multi-core and many-core
systems such as GPUs. The most prominent implementations are
Caffe~\cite{jia2014caffe} (GPUs), Warp-CTC (GPUs),
Theano~\cite{Bastien-Theano-2012, bergstra+al:2010-scipy} (CPUs/GPUs),
Torch~\cite{Collobert02torch:a} (CPUs/GPUs), 
and Google
TensorFlow~\cite{tensorflow2015-whitepaper} which uses nVIDIA CUDA Deep
Neural Network (cuDNN) and a multi-threaded implementation of batch gradient descent methods. 

Caffe has emerged as one of the leading Deep
Learning software, which can be used for developing
novel extensions, such as ones proposed in this paper. Caffe supports
execution on single node (connected with several GPUs) and recent
extensions include support on Intel systems. While we conduct the
proposed research with Caffe, the proposed extensions can also be
applied with TensorFlow.

Classical neural networks were shallow (1-2 layers), where batch
gradient descent methods worked well. However, with deeper networks, the
algorithms frequently suffered from the {\em vanishing gradient}
problem~\cite{Bianchini2014}).  The standard algorithm for training them
(described in section \ref{sec:fundamentals}) fails because the
gradients become smaller by several orders of magnitude as the network
becomes deeper.  This problem was solved in~\cite{Hinton06afast}
and~\cite{NIPS2006_3048}, who demonstrated that a network can be trained
one layer at a time with
\textit{autoencoders}~\cite{HintonSalakhutdinov2006b}, and then put it
together into a single network for
classification~\cite{Vincent:2010:SDA:1756006.1953039}.  Another
solution, that we will use, are rectified linear units, which have
become the standard in the field, though our results are valid for other
types of neurons as well.
These optimizations are available in Caffe and other Deep Learning
packages.

\subsection{Neural Network Pruning/Compression}
Network pruning is typically considered for reducing the memory
and computational requirements for execution on embedded devices.
Compression algorithms are then applied on these pruned networks for
realizing further memory savings.

In biology, apoptosis -- the death of neurons -- and its inverse neurogenesis,
have been studied~\cite{chambers2004simulated}, and it has been determined that
these processes can aid in learning effectively.  Chambers {\em et al.} modeled the
apoptosis/neurogenesis process by re-initializing the weights of
randomly selected neurons periodically. Their study concluded that
periodic apoptosis can improve the performance of a network. This forms
the basis of our paper -- by conducting adaptive apoptosis during the
training phase. 

Several researchers have conducted {\em offline} neuron apoptosis --
after the completion of the training phase. This is in sharp contrast to
our proposed approach, in which apoptosis is conducted during the
training phase itself, which reduces the overall training time, while
reducing the space requirements.
For offline apoptosis, researchers have conducted neuron apoptosis due to a lack of
computational resources~\cite{sietsma1988neural}, 
regularization to prevent overfitting~\cite{Reed:1993:PAS:2325855.2328312},
which provides algorithms for
removing synapses and possibly neurons.  
Kamruzzman {\em et al.} have demonstrated similar
results~\cite{DBLP:journals/corr/abs-1009-4983}, where the fundamental
objective is to remove redundant weights, and possibly neurons. The
pruning has been also applied for generating human-usable rules for
classification~\cite{gorban1999generation}. 
Other researchers have considered temporary neuron pruning --
typically referred as {\em Dropout}, proposed by Hinton {\em et al.}~
\cite{hinton2012improving}. A selective Dropout is proposed by other
authors~\cite{goodrich2014neuron}. However, since the Dropout is
temporary, they do not affect the final topology of the neural network
-- although they help with regularization.

Recently, Han {\em et al.} have proposed methods to remove
non-contributing weights after the training phase~\cite{han:nips15}.
However, this incurs significant slowdown (up to 2.5x). Other approaches
-- such as HashedNets -- compress the neural networks without removing
weights/neurons~\cite{hashednets}.  Murray and Chiang have proposed 
methods to remove non-contributing synapses (which are significantly
different from our approach of removing redundant neurons)~\cite{murray2015auto, hu2016network}.
Since we are primarily focused on removing redundant neurons adaptively, we consider our approaches to be complimentary to their approach. 

\section{Fundamentals}
\label{sec:fundamentals}

\subsection{Neural Networks}
Neural Networks are a  class of Machine Learning algorithms, which 
emulate the computational structure of the brain to learn nonlinear functions. 
The basic unit of a neural network is a \textit{neuron}, and neurons are
interconnected using {\em synapses}.

\subsubsection{Activation Functions}

\begin{figure*}[htbp]
\begin{minipage}[t]{0.95\columnwidth}
\centering
\includegraphics[width=\columnwidth]{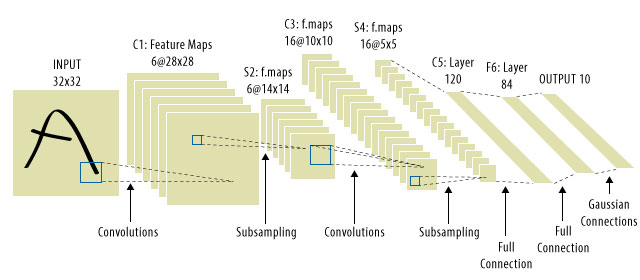}
\caption{An example of CNN execution on letter ``A'' from MNIST with  LeNet-5~\cite{lecun1998gradient}}
\label{fig:CNNStruc}
\centering
\end{minipage}
\centering
\begin{minipage}[t]{0.95\columnwidth}
\centering
\includegraphics[width=\columnwidth]{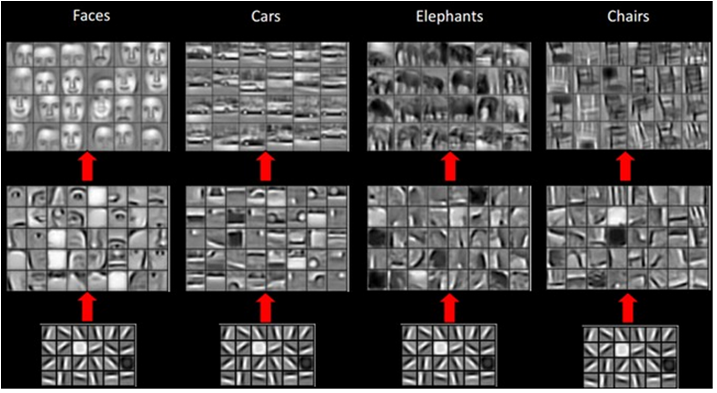}
\caption{Pictorial representation of features in 3 layers of a CNN~\cite{lee2009convolutional}. Notice the increasing complexity of features -- although first level features are the same}
\label{fig:CNNFeat}
\centering
\end{minipage}
\end{figure*}

There are several common nonlinear activation functions for neural
networks. In this paper, we will specifically focus on the two most
widely used, rectified linear units (ReLU) and sigmoid units.

\begin{eqnarray}
\mathrm{sigmoid}(x) & = & \frac{1}{1+\exp(-x)}\\
\mathrm{ReLU}(x) & = & \max(0, x)
\end{eqnarray}

\subsubsection{Convolutional Neural Networks}
Convolutional Neural Networks (CNN) are a widely used type of neural
network, which are specifically designed to preserve structure in the
data, such as the sequence of sounds in speech data or the relative
positions of pixels and features in an image.  

The fundamental unit of computation in CNN is a convolution -- which are
arrays in some dimension -- unlike vectors in DNN. Each neuron in a
convolution layer considers input from a small window (such as
\texttt{3x3,5x5}) in an image, applies a convolution and computes a
value, which is an indirect representation of the confidence of the
feature detected by the neuron. The window based computational structure
is useful for structured datasets, which can use a cascade of
convolutional layers to incrementally generate more complex features. 
An example of CNN is shown in Figure~\ref{fig:CNNStruc}, and
Figure~\ref{fig:CNNFeat} represents the features learned by a CNN using a
cascade of convolution layers. 
Besides convolution layers, a neural network may also consist of {\em
pooling layers}, which are also used for alleviating overfitting to the
presence of a feature in an image.

\subsection{Caffe}
Caffe~\cite{jia2014caffe} is a popular software package which provides abstractions for building neural networks of a wide range of topologies and training them with a wide range of optimizers.  Caffe provides abstractions of operations on tensors (multi-dimensional arrays), which are used for implementing Deep Learning algorithms.  Caffe builds a computational graph which consists of an input tensor followed by tensors for each individual hidden layer and output.  We choose Caffe because it is heavily optimized, and can be modified effectively through both the C++ backend and a Python interface.

Caffe's runtime is implemented using C++ -- which makes it attractive for extracting native performance.  We have modified this code for distributed memory implementation on large scale systems, using MPI to natively use network hardware and obtain optimal performance.  This is similar to FireCaffe~\cite{iandola2015firecaffe}, another distributed memory implementation of Caffe.  Additionally, Caffe abstracts GPU computations by leveraging nVIDIA CUDA Deep Neural Network Library (cuDNN).  As a result, the implementations are able to use large scale systems on traditional multi-core systems and many-core systems connected with GPUs.

\section{Neuron Apoptosis Solution Space}
\label{sec:apopspace}
In this section, we present a solution space for neuron apoptosis.
There are several important design considerations: the point of initial
apoptosis, when to conduct the subsequent apoptosis, apoptosis
termination and degree of apoptosis. We present design considerations
for each of these topics.

\subsection{Point of Initial Apoptosis}
The intuition behind initial apoptosis comes from the mammalian brain.  A significant apoptosis at any stage may have drastic consequences -- especially one very early or late in life.  However, the mammalian brain loses neurons periodically while retaining the fidelity of previously learned models (such as object, taste and voice recognition).

\subsubsection{Quarter-Life}
We take inspiration from particle physics to consider {\em half-life} to
be the point of initial apoptosis. It is possible to calculate half-life
statically (by using the number of epochs/training time provided by the
user -- generally as an input). 
However, with initial point as half-life, we would possibly be
conducting apoptosis at end of life. Hence, we instead consider the
point of initial apoptosis to be {\em quarter-life}.

\subsubsection{Random}
It is also possible to consider {\em random} time-stamp/epoch during training as the
initial point of apoptosis. In most cases, if the initial apoptosis
occurs {\em early}, it would lead to significant pruning with
potentially significant damage to the model accuracy. In other cases
(such as quarter-life/half-life/end of training), it would generate a high fidelity
model, albeit without observing speedup in training time.

\subsubsection{End of Training}
Previously proposed approaches perform apoptosis at the end of training.  These techniques remove synapses that fail to contribute, rather than redundant neurons.  However, this approach requires a re-training phase, as in Han {\em et al.}~\cite{han:nips15}.  To ensure that training time does not increase, we do not pursue this heuristic.

\subsection{Subsequent Apoptosis}
Another important design consideration is when to conduct subsequent
apoptosis. We present intuition behind our design choices here:

\subsubsection{Fixed/Random} 
An intuitive heuristic is to conduct subsequent apoptosis at
fixed/random intervals.  While there are several advantages to
fixed/random apoptosis, a high frequency would result in many distance
calculations, most of which will not result in any significant apoptosis
(as the weights would not change dramatically).  On the other hand, a
low frequency would likely conduct less apoptosis of neurons, but would
allow for a more significant change in weights when compared to a high
frequency.

\subsubsection{Logarithmic} We draw inspiration from the properties of
loss functions in Deep Learning algorithms for making a case of
logarithmic number of apoptoses.
In general, the decay of the loss function can be well represented using an exponential decay function.
Hence, it is expected that the subsequent apoptosis at
logarithmic steps will handle {\em half-life} of errors. Additionally,
with logarithmic number of apoptosis, the overall time spent in distance
calculations would also be minimized.

\subsection{Temporal Degree of Apoptosis}
Another design consideration is the temporal degree of apoptosis. 
Essentially, it is important to consider whether apoptosis should remain
fixed, become increasingly aggressive or conservative:

\subsubsection{Fixed}
The default choice is to use a fixed degree of apoptosis. The
expectation with this approach is to ensure that rate of redundant
neuron pruning is fixed over time.

\subsubsection{Increasingly Aggressive}
The intuition behind increasingly aggressive apoptosis is that the overall
possibility to prune redundant neurons is diminished -- as the
termination criteria is reached. The rate at which degree increases
itself has several choices (such as fixed-linear, random and others). For
simplicity, we propose to increase the degree of apoptosis linearly --
although without loss of generality, other functions may be applied as
well. 

\subsubsection{Increasingly Conservative}
On the contrary, the intuition behind conservative apoptosis is to save
the remaining neurons (after initial apoptosis) as much as possible,
especially if the early apoptosis already prunes the majority of redundant neurons. 
Without loss of generality, we decrease the degree of apoptosis with
number of apoptosis steps.

\subsection{Modeling Space-Time Complexity}
An important aspect of apoptosis is to ensure that the overhead of the
proposed heuristics is negligible in terms of space and time complexity. 
For each of the proposed heuristics earlier, the space complexity is
$O(1)$, since we only need to save a few scalars (such as parameters to
a linear function for degree of apoptosis). Hence, the space overhead of
our approach is fairly limited.

At each apoptosis step, we need to prune the redundant neurons in each
layer. Let $n_i$ represent the number of neurons in the $i_{th}$ layer.
Hence we need to conduct ${n_i}^2$ calculations at each layer (for each
inter-neuron distance calculation). This is particularly scalable in for
multiple reasons: 1) The distance calculation is computed sporadically
(such as only $\log$ number of iterations) 2) With each apoptosis, is it
expected that a significant number of redundant neurons are removed
(${\hat{n}_i} \ll n_i$), hence the subsequent distance calculations are
relatively insignificant -- and easily amortized over the cost of
overall training and benefits received by adaptive apoptosis during the
training phase.

\subsection{Other Design Considerations}

\subsubsection{Incoming vs. Outgoing Synapses}
Every neuron has both incoming and outgoing synapses, and they play different
roles (an example is shown in Figure~\ref{fig:apop}).  In sigmoid neurons, both sets of synapses can be used for apoptosis.
With $\ReLU$ neurons, however, only the incoming synapses can be used.
This is because no matter how similar the outgoing synapses, if the incoming ones
are very different, then for a large range of data, one of the neurons will output zero
and the other will output an arbitrarily large positive number.  In section \ref{sec:proof},
we give proofs of this restriction and on the amount of error introduced by these
operations.
\begin{figure}[htbp]
\centering
\includegraphics[width=0.5\columnwidth]{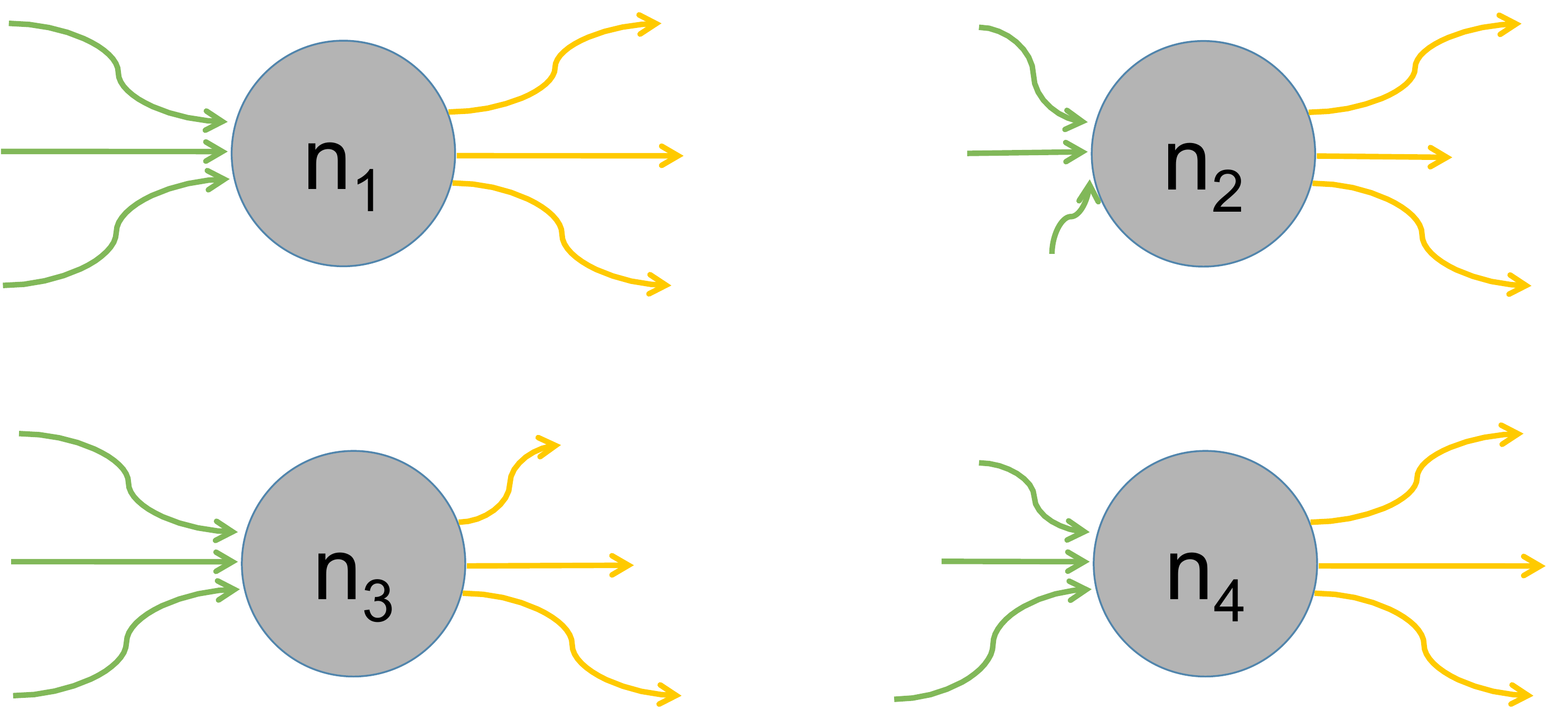}
\caption{Neurons $n_1$ and $n_3$ have similar incoming synapses and $n_1$ and $n_4$ have similar outgoing synapses.  In section \ref{sec:proof} we show that in both sigmoid and $\ReLU$ we can remove $n_3$, but only in sigmoid can we remove $n_4$.}
\label{fig:apop}
\end{figure}

\section{Proofs for Selecting Input/Output Synapses and  Bounding Accuracy Loss}
\label{sec:proof}

For any given neuron $n$, let $v$ denote the \textit{incoming weights}
which affect the argument of the function, and let $w$ denote the
\textit{outgoing weights}, which are the coefficients applied to the
value of the function before being input into the next layer of neurons.
We will use $\cdot$ to represent dot product and $\odot$ to represent component-wise
multiplication of a vector with a scalar or another vector.

\subsection{Selecting Input/Output Synapses}
\begin{proposition}
Let $n_1$ and $n_2$ be sigmoid neurons, $v_1$ and $v_2$ their incoming
weights and $w_1$ and $w_2$ their outgoing weights.  Then
\begin{equation}
\label{approxeq}
w_1\odot \sigma(v_1\cdot x)+w_2\odot\sigma(v_2\cdot x) \approx w\odot \sigma(v\cdot x)
\end{equation}
holds for all $x$ if either
\begin{enumerate}
  \item $v_1\approx v_2$.  Then $w=w_1+w_2$ and $v=v_1$.
	\item $w_1\approx \alpha w_2$.  Then $w=w_1+w_2$ and $v=\frac{\alpha v_1+v_2}{\alpha+1}$ so long as $\alpha\neq -1$.
\end{enumerate}
\end{proposition}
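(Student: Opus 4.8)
The plan is to prove each case by reducing the vector identity (\ref{approxeq}) to a scalar statement about $\sigma$, exploiting the structure of whichever pair of weights is assumed close, and then invoking the regularity (Lipschitz continuity and bounded curvature) of the sigmoid. In both cases the prescribed outgoing weight is $w=w_1+w_2$, so all the content lies in identifying the correct argument $v$ and bounding the discrepancy.

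For case (1) I would argue directly from continuity. Since $\sigma$ is $\tfrac14$-Lipschitz and $v_1\approx v_2$, the two scalar pre-activations agree up to an error at most $\tfrac14\|v_1-v_2\|\,\|x\|$ by Cauchy--Schwarz, so $\sigma(v_1\cdot x)\approx\sigma(v_2\cdot x)$ uniformly over bounded $x$. Writing the common value as $s$, the left-hand side collapses to $w_1\odot s+w_2\odot s=(w_1+w_2)\odot s$, which is exactly $w\odot\sigma(v\cdot x)$ with $v=v_1$. This direction is routine; the only quantitative content is the Lipschitz estimate.

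For case (2) I would first use $w_1\approx\alpha w_2$ to factor the common outgoing direction out of the sum,
\begin{equation}
w_1\odot\sigma(v_1\cdot x)+w_2\odot\sigma(v_2\cdot x)\approx w_2\odot\bigl(\alpha\,\sigma(v_1\cdot x)+\sigma(v_2\cdot x)\bigr),
\end{equation}
and compare against the target $w\odot\sigma(v\cdot x)\approx(\alpha+1)\,w_2\odot\sigma(v\cdot x)$. Since $\alpha\neq-1$ makes $\alpha+1\neq0$, dividing through by the common vector $(\alpha+1)w_2$ reduces everything to the scalar claim
\begin{equation}
\sigma(v\cdot x)\approx\frac{\alpha\,\sigma(v_1\cdot x)+\sigma(v_2\cdot x)}{\alpha+1}.
\end{equation}
With the prescribed $v=\frac{\alpha v_1+v_2}{\alpha+1}$ one has $v\cdot x=\frac{\alpha(v_1\cdot x)+(v_2\cdot x)}{\alpha+1}$, so setting $t=\frac{\alpha}{\alpha+1}$, $a=v_1\cdot x$ and $b=v_2\cdot x$ the assertion becomes
\begin{equation}
\sigma\bigl(ta+(1-t)b\bigr)\approx t\,\sigma(a)+(1-t)\,\sigma(b).
\end{equation}

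The hard part will be justifying this last line, since it is \emph{false} as an exact identity: $\sigma$ is nonlinear (convex for negative arguments, concave for positive ones), so the weighted average of the outputs does not equal the output at the weighted-average argument. I would control the gap by Taylor-expanding $\sigma(a)$ and $\sigma(b)$ about the averaged point $m=ta+(1-t)b$. The crucial cancellation is that the first-order terms vanish in the weighted average, because $t(a-m)+(1-t)(b-m)=0$ by construction of $m$, leaving only a quadratic residual of size $O\bigl(\|\sigma''\|_\infty\,|a-b|^2\bigr)$, which for $t\in(0,1)$ sharpens to $\tfrac12\|\sigma''\|_\infty\,t(1-t)\,(a-b)^2$. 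Hence the approximation is tight precisely when the two pre-activations are close, or lie in the near-linear regime of $\sigma$ near the origin — the regime in which merging the two neurons is meaningful — and the same curvature estimate feeds directly into the quantitative accuracy bound of the following result.
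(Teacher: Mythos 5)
Your proof is sound (to the same standard of rigor as the paper's own, since the claim is inherently an ``$\approx$'' statement), but it takes a genuinely different route. The paper does not verify the prescribed $(w,v)$; it \emph{derives} them: it replaces $\sigma$ by its first-order Taylor expansion $\sigma(z)=\tfrac12+\tfrac z4+O(z^3)$, turns Equation~\ref{approxeq} into a linear system, equates constant terms to force $w=w_1+w_2$, and then solves component-wise to get $v_k=\frac{w_{1,j}v_{1,k}+w_{2,j}v_{2,k}}{w_{1,j}+w_{2,j}}$, observing that this formula can be consistent across all output components $j$ only when $v_1\approx v_2$ or $w_1\approx\alpha w_2$ --- so the paper's argument also explains why these two hypotheses are essentially the \emph{only} ones under which a single merged neuron can work, something your sufficiency-only verification does not address. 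Conversely, your argument buys quantitative control the paper's lacks: your case~(1) via the $\tfrac14$-Lipschitz bound is a global estimate, valid far from the origin without any linearization (and it anticipates the paper's later error-bound proposition for sigmoid), and your case~(2) via the Jensen-gap expansion about $m=ta+(1-t)b$ with first-order cancellation gives an explicit curvature-controlled residual where the paper only has an uncontrolled $O(z^3)$ remainder. Two caveats you should make explicit: first, your sharpened constant $t(1-t)$ requires $t=\frac{\alpha}{\alpha+1}\in(0,1)$, i.e.\ $\alpha>0$, whereas the statement permits any $\alpha\neq-1$; for $\alpha<0$ the combination is an extrapolation and the constant degrades (though the $O(|a-b|^2)$ form survives). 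Second, your case~(2) residual is governed by $|a-b|=|(v_1-v_2)\cdot x|$, which the hypothesis $w_1\approx\alpha w_2$ does not control --- but this is not a defect relative to the paper: its linearization has exactly the same blind spot, and the paper itself concedes that the outgoing-weight case is ``bounded by larger error in the worst case.'' Both proofs, in other words, are honest only in the near-linear regime of $\sigma$, and you say so explicitly, which is a virtue.
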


\begin{proof}
Equation \ref{approxeq} is a system of equations in $v$ and
$w$.  If we take a linear approximation of $\sigma$, which is given by
$\sigma(x)=\frac{1}{2}+\frac{x}{4}+O(x^3)$, we transform it into a
linear system of equations.  Equating constants implies that
$w=w_1+w_2$ (and note that this also means that far from the origin, the
approximation error will be very small, because sigmoid has horizontal asymptotes.)
Hence:

\begin{equation}
w_1\odot v_1\cdot x+ w_2\odot v_1\cdot x = (w_1+w_2)\odot v\cdot x
\end{equation}

We can solve this for each component of the vector $v$, which gives us
\begin{equation}
v_k = \frac{w_{1,j} v_{1,k}+ w_{2, j} v_{2, k}}{w_{1, j}+ w_{2, j}}
\end{equation}
which must hold for all the components $w_{i, j}$ of the vectors $w_1$
and $w_2$, simultaneously.

Equation \ref{approxeq} is only possible if the $w$'s cancel or if both of them have all
entries the same, which is unlikely.  The two conditions in the
statement, that $v_1\approx v_2$ or $w_1\approx \alpha w_2$, both result
in cancellations, and upon simplifying, we get the value for $v$ given.
\end{proof}

Below, we will compute a bound for the error in the case of similar
incoming weights ($v_1\approx v_2$).  The situation for outgoing weights
($w_1\approx w_2$)
is similar, but bounded by larger error in the worst case (proof not included due to space limitations).
In practice, though, this bound is not a very tight one.
In section \ref{sec:exp}, we will use both forms of apoptosis for
sigmoid neurons, though we remark that this larger error bound does
become a problem when attempting to do apoptosis on input features.

The situation for ReLU neurons is simpler.  No single ReLU neuron can
closely approximate a general sum of two ReLU neurons, an an extreme
example of this failure is
\begin{equation}
\ReLU(x)+\ReLU(-x) = |x|.
\end{equation}

However, if the two ReLU neurons have incoming weight vectors pointing
in almost the same direction, so that one is approximately a positive
scalar multiple of the other, we have

\begin{proposition}
Let $n_1, n_2$ be ReLU neurons with incoming weights $v_1, v_2$ and outgoing weights $w_1,w_2$.  Then, if $v_2\approx \alpha v_1$ where $\alpha>0$, we have
\begin{eqnarray}
w_1\odot \ReLU(v_1\cdot x) + w_2 \odot \ReLU(v_2\cdot x) &\approx&\\
(w_1+\alpha w_2)\odot \ReLU(v_1\cdot x)&&
\end{eqnarray}
\end{proposition}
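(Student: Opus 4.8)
The plan is to exploit the one algebraic property that separates ReLU from sigmoid here: \emph{positive homogeneity}. For any scalar $\alpha>0$ and any real $t$ we have $\ReLU(\alpha t)=\max(0,\alpha t)=\alpha\max(0,t)=\alpha\ReLU(t)$, since scaling by a positive number commutes with taking the positive part. This is precisely the identity that fails for the counterexample $\ReLU(x)+\ReLU(-x)=|x|$ (there the multiplier is $-1<0$), and it explains why the hypothesis insists on $\alpha>0$.

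First I would dispose of the exact case $v_2=\alpha v_1$. Since $v_1\cdot x$ is a scalar and $\alpha>0$, substituting gives $\ReLU(v_2\cdot x)=\ReLU\!\big(\alpha(v_1\cdot x)\big)=\alpha\,\ReLU(v_1\cdot x)$ for every $x$. Hence $w_2\odot\ReLU(v_2\cdot x)=\alpha\,w_2\odot\ReLU(v_1\cdot x)$, and adding the first term and factoring out the common scalar $\ReLU(v_1\cdot x)$ yields $(w_1+\alpha w_2)\odot\ReLU(v_1\cdot x)$. So in the exact case the claimed relation is an equality, not merely an approximation, for all $x$ --- which is why this bound is tighter than the sigmoid one.

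Next I would upgrade to the genuine hypothesis $v_2\approx\alpha v_1$ by writing $v_2=\alpha v_1+\varepsilon$ and tracking the residual. The discrepancy between the true output and the collapsed neuron is $w_2\odot\big(\ReLU(v_2\cdot x)-\ReLU(\alpha v_1\cdot x)\big)$, and since $\ReLU$ is $1$-Lipschitz, $|\ReLU(v_2\cdot x)-\ReLU(\alpha v_1\cdot x)|\le|(v_2-\alpha v_1)\cdot x|=|\varepsilon\cdot x|\le\|\varepsilon\|\,\|x\|$. The error vector is therefore bounded in norm by $\|w_2\|\,\|\varepsilon\|\,\|x\|$, so it vanishes linearly as $\varepsilon\to 0$ and stays small \emph{relative} to the retained signal $(w_1+\alpha w_2)\odot\ReLU(v_1\cdot x)$, which itself scales with $\|x\|$.

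The part needing care --- and the natural place for the $\approx$ to be made honest --- is the region near the decision boundary $v_1\cdot x=0$. Away from this kink both $v_1\cdot x$ and $v_2\cdot x$ share a sign for small $\varepsilon$, so the two neurons behave as exact positive multiples and the Lipschitz bound is loose. The only delicate zone is the thin slab where $v_1\cdot x$ and $v_2\cdot x$ straddle zero, since there one unit can be active while the other is clamped off; I expect this to be the main obstacle. The argument is completed by observing that this slab shrinks with $\|\varepsilon\|$ and that, even inside it, the absolute error is still controlled by $|\varepsilon\cdot x|$ via the same Lipschitz estimate, so the approximation holds uniformly in the same sense as in the sigmoid case.
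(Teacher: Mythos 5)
Your proof is correct, and its core mechanism is the same as the paper's: replace $v_2$ by $\alpha v_1$ (the only step where error enters), then use positive homogeneity, $\ReLU(\alpha t)=\alpha\,\ReLU(t)$ for $\alpha>0$, to factor out $\ReLU(v_1\cdot x)$ --- exactly the paper's chain of approximate equalities. Where you genuinely go further is in making the $\approx$ quantitative inside the same argument: you bound the discrepancy $w_2\odot\bigl(\ReLU(v_2\cdot x)-\ReLU(\alpha v_1\cdot x)\bigr)$ by $\|w_2\|\,\|\varepsilon\|\,\|x\|$ using the fact that $\ReLU$ is $1$-Lipschitz. The paper defers this quantification to a separate proposition (Proposition~\ref{prop:cases}), where it is obtained by a four-way case split on the signs of $v_1\cdot x$ and $v_2\cdot x$, giving the bound $\max(\alpha|v_1||x|,\,|v_2||x|,\,|x|\epsilon)$. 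Your Lipschitz route is both shorter and strictly sharper: in the two ``mixed-sign'' cases the active value is itself squeezed below $|(v_2-\alpha v_1)\cdot x|$ (e.g.\ if $v_2\cdot x>0\geq\alpha v_1\cdot x$, then $v_2\cdot x\leq(v_2-\alpha v_1)\cdot x$), so the uniform bound $\epsilon|x|$ holds in all four cases, improving on the paper's $\max$. One small slip in your exposition: away from the kink, in the region where both arguments are positive, the Lipschitz bound is not loose but tight (the error there is exactly $|\varepsilon\cdot x|$); it is only where both arguments are negative that the error vanishes. This does not affect the argument, since the Lipschitz estimate covers every region uniformly, and it preserves the paper's key qualitative point that the error necessarily scales with $|x|$ because $\ReLU$ does not saturate.
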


\begin{proof}
The statement follows from the approximate sequence of equations

\begin{eqnarray}
w_1\odot \ReLU(v_1\cdot x) + w_2 \odot \ReLU(v_2\cdot x) &=&\\
w_1\odot \max(0, v_1\cdot x) + w_2\odot\max(0, v_2\cdot x)&\approx&\\
w_1\odot\max(0, v_1\cdot x) + w_2\odot\max(0, \alpha v_1\cdot x)&=&\\
w_1\odot\max(0, v_1\cdot x) + w_2\odot\alpha\max(0, v_1\cdot x)&=&\\
(w_1 + \alpha w_2)\odot\max(0, v_1\cdot x)
\end{eqnarray}
where error is only introduced in the substitution step where $v_2$ is
replaced by $\alpha v_1$.
\end{proof}

This situation  replaces a pair of half-planes where the neurons are
active, 
with a
single half-plane of activity, and because the two half-planes are very
close together, this is a good approximation.

The above two propositions show that we can remove sigmoid neurons that
have incoming weights nearly identical to another sigmoid neuron.
We can also remove sigmoid neurons where outgoing weights are a multiple of the outgoing weights
of another neuron (except -1). ReLU neurons whose incoming weights
are a positive multiple of those of another may be removed as well, with minimal change to the
output function.

\subsection{Proofs on Error Bounds}

In each of these cases, we can understand error bounds for what happens
after apoptosis by looking at how different the initial and final output
are.

\begin{proposition}
\label{prop:cases}
Let $v_1, v_2$ be incoming weight vectors, and assume that $|v_2-\alpha v_1|<\epsilon$ for some $\alpha>0$. Then \[|\ReLU(v_1\cdot x)+\ReLU(v_2\cdot x)-(1+\alpha)\ReLU(v_1\cdot x)|\] is at most \[\max(\alpha|v_1||x|, |v_2||x|, |x|\epsilon).\]
\end{proposition}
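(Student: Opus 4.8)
The plan is to collapse the target expression algebraically, reduce it to a one-variable statement about the two scalars $s=v_1\cdot x$ and $t=v_2\cdot x$, and then split into cases according to the signs of $s$ and $t$. The key observation is that the three terms appearing in the stated maximum should fall out of the three nontrivial sign regimes, while the factor of $\epsilon$ enters through a single Cauchy--Schwarz estimate.

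First I would note that the three copies of $\ReLU(v_1\cdot x)$ telescope: since $(1+\alpha)\ReLU(v_1\cdot x)=\ReLU(v_1\cdot x)+\alpha\,\ReLU(v_1\cdot x)$, the quantity to be bounded is simply $|\ReLU(v_2\cdot x)-\alpha\,\ReLU(v_1\cdot x)|$. Because the hypothesis gives $\alpha>0$, I can absorb the scalar through the nonlinearity using $\alpha\max(0,s)=\max(0,\alpha s)$, so the object of interest becomes $|\ReLU(t)-\ReLU(\alpha s)|$. I would then record the estimate that ties everything to $\epsilon$: by Cauchy--Schwarz,
\[
|t-\alpha s| = |(v_2-\alpha v_1)\cdot x| \le |v_2-\alpha v_1|\,|x| < \epsilon\,|x|.
\]

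Next comes the case analysis. If $s,t\ge 0$, then $\ReLU(t)-\ReLU(\alpha s)=t-\alpha s$, so the value is $|t-\alpha s|<\epsilon|x|$. If $s<0\le t$, then $\ReLU(\alpha s)=0$ and the value is $|t|=|v_2\cdot x|\le |v_2||x|$. If $t<0\le s$, then $\ReLU(t)=0$ and the value is $\alpha s=\alpha|v_1\cdot x|\le \alpha|v_1||x|$. Finally if $s,t<0$ both ReLUs vanish and the value is $0$. In every regime the quantity is at most one of $\epsilon|x|$, $|v_2||x|$, or $\alpha|v_1||x|$, hence at most their maximum, which is exactly the claimed bound.

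I expect no serious obstacle here: once the expression is simplified, the argument is routine, and the only real decision is the sign split, whose three active cases correspond one-to-one with the three terms in the stated maximum. The mild subtlety worth flagging is that the bound is not tight: since $\ReLU$ is $1$-Lipschitz, $|\ReLU(t)-\ReLU(\alpha s)|\le |t-\alpha s|<\epsilon|x|$ holds uniformly across all four cases, which is strictly sharper than the stated maximum. The proposition's bound is thus a cleaner per-region weakening of this uniform estimate, and I would remark on the Lipschitz version rather than grind further, since it subsumes the claim.
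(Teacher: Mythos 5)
Your proof is correct and follows essentially the same route as the paper's: cancel the repeated $\ReLU(v_1\cdot x)$ terms, pull $\alpha$ inside the $\max$, and split into the four sign cases, which yield exactly the paper's bounds ($0$, $\alpha|v_1||x|$, $|v_2||x|$, and $|x|\epsilon$ via Cauchy--Schwarz). Your closing observation is a genuine bonus the paper omits: since $\ReLU$ is $1$-Lipschitz, the uniform bound $|\ReLU(v_2\cdot x)-\ReLU(\alpha v_1\cdot x)|\le|v_2\cdot x-\alpha v_1\cdot x|<\epsilon|x|$ holds in every case, which is sharper than (and subsumes) the stated maximum.
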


\begin{proof}
The quantity in question is
\begin{eqnarray}
\ReLU(v_1\cdot x)+\ReLU(v_2\cdot x)-(1+\alpha)\ReLU(v_1\cdot x)&&\\
\max(0, v_1\cdot x) + \max(0, v_2\cdot x)-(1+\alpha)\max(0, v_1\cdot x)&&\\
\max(0, v_2\cdot x)-\max(0, \alpha v_1\cdot x)&&
\end{eqnarray}

Here, we encounter the difficulty of the shape of the $\ReLU$ neuron, and have to do a case study.  There are four possibilities:

\begin{enumerate}
  \item if $v_2\cdot x\leq 0$ and $v_1\cdot x \leq 0$, then both are zero so the difference is $0$.
	\item if $v_2\cdot x\leq 0$ and $v_1\cdot x \geq 0$, then $\max(0, v_2\cdot x)=0$ so the difference is $|\alpha v_1\cdot x|\leq \alpha|v_1||x|$
	\item if $v_2\cdot x\geq 0$ and $v_1\cdot x \leq 0$, then $\max(0, v_1\cdot x)=0$ so the difference is $|v_2\cdot x|\leq |v_2||x|$
	\item if $v_2\cdot x\geq 0$ and $v_1\cdot x \geq 0$, then the difference is $|v_2\cdot x-\alpha v_1\cdot x|=|(v_2-\alpha v_1)\cdot x|$, which is $|v_2-\alpha v_1||x||\cos\theta|$.  This is then at most $|x|\epsilon$.
\end{enumerate}
\end{proof}

The dependence on $|x|$ in this is a manifestation of the fact that
$\ReLU$ does not saturate, and can grow without bound.  The worst error
regions, though, are where only one of $v_1\cdot x$ and $v_2\cdot x$ are
positive, but these regions are small (measured in angle) because $v_2$
closely approximates $\alpha v_1$, so they are nearly in the same
direction in the first place.  The fact that the error can be without
bound for arbitrary data does not arise for sigmoid, which saturates.

We note that the same logic as in case 4 of Proposition~\ref{prop:cases} implies that if $|v_1-v_2| < \epsilon$, then
\begin{equation}
\label{eqn:lem4}
|v_1\cdot x-v_2\cdot x|< |x|\epsilon
\end{equation}

\begin{proposition}
Let $|v_1-v_2|<\epsilon$.  Then 
\begin{equation}
|w_1\odot \sigma(v_1\cdot x)+w_2\odot\sigma(v_2\cdot x)-(w_1+w_2)\odot \sigma(v_1\cdot x)|
\end{equation}
is at most
\begin{equation}
\frac{|x||w_2|\max\left(|\exp(v_1\cdot x)-\exp(v_1\cdot x\pm \epsilon|x|)|\right)}{(\exp(v_1\cdot x)+1)(\exp(v_2\cdot x)+1)}
\end{equation}
\end{proposition}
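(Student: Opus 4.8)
The plan is to exploit a cancellation in the left-hand side before doing any analysis. Distributing $(w_1+w_2)\odot\sigma(v_1\cdot x)$ and cancelling the two copies of $w_1\odot\sigma(v_1\cdot x)$ collapses the expression to $w_2\odot\bigl(\sigma(v_2\cdot x)-\sigma(v_1\cdot x)\bigr)$, so the quantity to bound is simply $|w_2|\,\bigl|\sigma(v_2\cdot x)-\sigma(v_1\cdot x)\bigr|$. This reduces the entire problem to controlling a single scalar sigmoid difference, which is exactly the shape suggested by the claimed bound.

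Next I would compute that difference in closed form rather than through a derivative estimate. Writing $\sigma(t)=\exp(t)/(\exp(t)+1)$ and putting the two terms over a common denominator, the cross terms cancel and one is left with
\[
\sigma(v_1\cdot x)-\sigma(v_2\cdot x)=\frac{\exp(v_1\cdot x)-\exp(v_2\cdot x)}{(\exp(v_1\cdot x)+1)(\exp(v_2\cdot x)+1)}.
\]
This already reproduces the exact denominator appearing in the statement, so only the numerator remains to be estimated.

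For the numerator I would invoke the earlier observation, equation (\ref{eqn:lem4}), that $|v_1-v_2|<\epsilon$ forces $|v_1\cdot x-v_2\cdot x|<|x|\epsilon$. Hence $v_2\cdot x$ lies in the interval $[\,v_1\cdot x-\epsilon|x|,\,v_1\cdot x+\epsilon|x|\,]$, and since $\exp$ is monotone the quantity $|\exp(v_1\cdot x)-\exp(v_2\cdot x)|$ attains its largest value at one of the two endpoints $v_1\cdot x\pm\epsilon|x|$. Replacing the numerator by this endpoint maximum yields precisely the $\max(\cdot)$ factor in the claimed bound, multiplied by $|w_2|$.

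The main obstacle is matching the precise form of the stated bound, in particular the leftover $|x|$ factor in its numerator: the direct computation above gives $|w_2|$ times the endpoint-maximum over the exponential denominator, with no additional $|x|$. I would therefore expect that extra factor to enter through a looser intermediate estimate --- for instance bounding $|\exp(v_1\cdot x)-\exp(v_2\cdot x)|$ first by a mean-value expression proportional to $|v_1\cdot x-v_2\cdot x|<|x|\epsilon$ before reintroducing the exponential terms --- so that pinning down the exact constant, and confirming whether the $|x|$ is genuinely required or is harmless slack, is the step demanding the most care, rather than the cancellation or the sigmoid identity themselves.
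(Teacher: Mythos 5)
Your proposal follows essentially the same route as the paper's own proof: cancel the $w_1$ terms to reduce to $|w_2|\,|\sigma(v_2\cdot x)-\sigma(v_1\cdot x)|$, rewrite the sigmoid difference in closed form as $\frac{|\exp(a)-\exp(b)|}{(\exp(a)+1)(\exp(b)+1)}$ with $a=v_2\cdot x$, $b=v_1\cdot x$, and bound the numerator by the endpoint maximum using $|a-b|<\epsilon|x|$ (equation \ref{eqn:lem4}) and monotonicity of $\exp$. Your closing worry about the stray factor of $|x|$ in the stated bound is well founded, and you should know the paper does not resolve it: its proof derives exactly what you derive, namely the bound \emph{without} that factor, and then asserts ``which gives the result'' with no justification for inserting $|x|$. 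Moreover, the extra $|x|$ is not harmless slack in the direction you might hope: for $|x|<1$ it makes the claimed bound \emph{smaller} than what is provable, and it can genuinely fail --- take $v_1\cdot x=0$, $\epsilon=1$, $|x|=0.5$, $v_2\cdot x=0.4$; then $|\exp(0.4)-1|\approx 0.49$ exceeds $|x|\max\left(|1-\exp(\pm 0.5)|\right)\approx 0.32$, and the denominators on both sides agree, so the stated inequality is violated. The defensible statement is the one you and the paper actually prove, with numerator $|w_2|\max\left(|\exp(v_1\cdot x)-\exp(v_1\cdot x\pm\epsilon|x|)|\right)$; the $|x|$ appears to be a residue of a mean-value-theorem estimate of the form $|\exp(a)-\exp(b)|\le\exp(\xi)\,\epsilon|x|$ conflated with the endpoint bound, exactly as you guessed.
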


We note that the factors including $|x|$ in the numerator are dominated by the ones in the denominator, so as expected, for large inputs, the horizontal asymptotes of sigmoid cause the error to be very small.

\begin{proof}
We proceed in stages.  First, we note that the terms involving $w_1$
cancel, giving us $|w_2\odot\sigma(v_2\cdot x)-w_2\odot\sigma(v_1\cdot
x)|$.  We can pull out the $w_2$, to get the quantity
$|w_2||\sigma(v_2\cdot x)-\sigma(v_1\cdot x)|$.

As $|v_1-v_2|<\epsilon$, equation \ref{eqn:lem4} tells us that $|v_2\cdot
x-v_1\cdot x|<|x|\epsilon$.  So, if $a=v_2\cdot x$ and $b=v_1\cdot x$,
we must only determine the error in $|\sigma(a)-\sigma(b)|$ for
$|a-b|<\epsilon|x|$.  But
$|\sigma(a)-\sigma(b)|=\frac{|\exp(a)-\exp(b)|}{(\exp(a)+1)(\exp(b)+1)}$,
and $|\exp(a)-\exp(b)|$ must at most be $\max(|\exp(b)-\exp(b\pm
|x|\epsilon)|)$, because $a$ and $b$ are close together, which gives the
result.  \end{proof}

Even for a fixed
$\epsilon$, this derivation may result in significant error due to the factor of
$|w_2|$, in practice $|w_2|$ will not be so much larger than $|v_1|$ and
$|v_2|$ that the exponentials in the denominator will not cause the
error to be small.

In practical use, however, fixing a single $\epsilon$ is suboptimal.
This is due in part to the vanishing gradient problem inherent in any
back-propagation trained neural network.  Because the neurons in
different layers will evolve in weight-space at different rates,
different values of $\epsilon$ would be needed for each layer.  An alternate
approach is to define a scaling \textit{factor}, $f$, and for each
neuron $n$ with incoming weights $v$ and outgoing weights $w$, instead
of looking for $n'$ with $v'$ and $w'$ such that $|v-v'|<\epsilon$ or
$|w-w'|<\epsilon$, we look for those with $|v-v'|<|v|/f$ or
$|w-w'|<|w|/f$.

\begin{figure}[htbp]
\centering
\includegraphics[width=0.5\columnwidth]{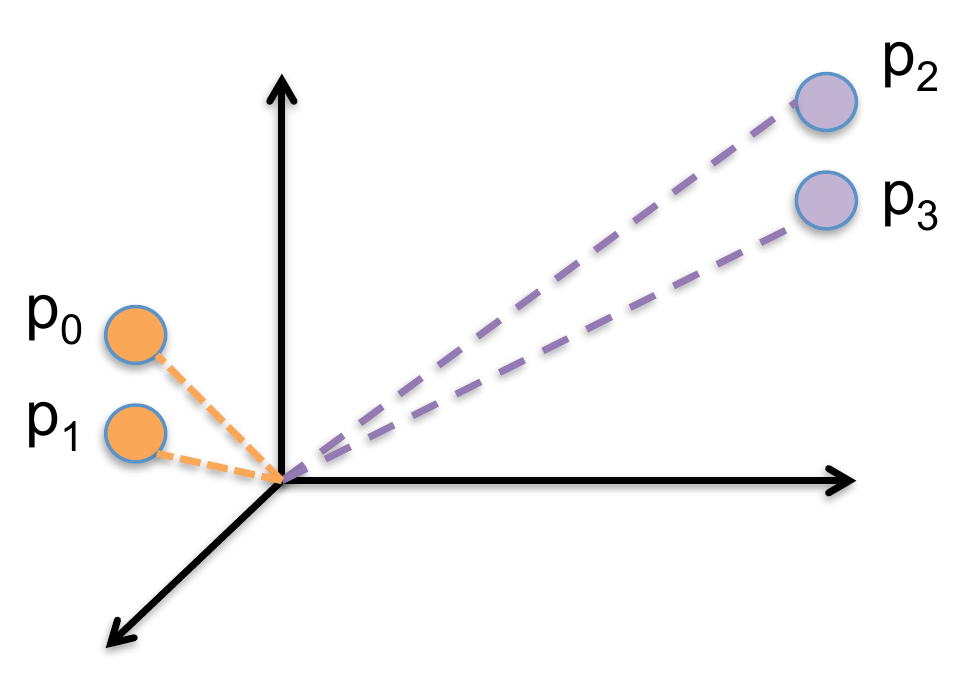}
\caption{Two pairs of neurons which are unit distance apart, but
neurons $p_2$ and $p_3$ are closer
}
\label{fig:circles}
\end{figure}

This handles several problems, most prominently the dimensionality
problem.  Randomly chosen points are expected to be farther
apart in higher dimensional spaces.  As they will also be farther from
the origin, this also allows the dimension to affect the range being
checked for apoptosis.

In Section \ref{sec:exp}, when we conduct performance evaluation, we will have
five levels of apoptosis.  The normal level will have angle $f=\nor$.
The others will be conservative, very conservative, aggressive and very
aggressive, with $f=\con, \vcon, \agg$ and $\vagg$ respectively.

\section{Large Scale Parallelization}
\label{sec:parallel}
In this section, we present the solution space in distributed memory
implementation -- including multi-core and many-core architectures. 
Our objective is to extract
the best possible performance -- while leveraging the accelerator based
systems (such as GPUs) and traditional multi-core architectures as well,
using multi-threading.

Our objective is also to leverage existing Deep Learning software --
such as Caffe/TensorFlow/Theano -- for our large scale
implementations of the proposed heuristics, so that existing
optimizations (such as Momentum, AdaGrad, Dropout) can be combined with our own
heuristics.  We specifically use Caffe, since it is an easily
extensible dataflow programming model, with already existing
implementations on GPUs using cuDNN and multi-threading implementations
for multi-core architectures.

\subsection{Caffe Runtime Changes}
Caffe sets up a \textit{solver} and a \textit{network}, the latter contains the data and the weights of the model whereas the former dictates the rules for performing gradient descent.  Our implementation is directly in the Caffe runtime, which speeds up performance significantly compared to implementation in the Python interface.

\subsection{Distributed Memory Parallelization}
We model our distributed memory implementation on the existing Caffe multi-GPU parallelization.  As such, our implementation is based on data parallelism, where the model is replicated and data is distributed, rather than model parallelism, where the model is distributed across multiple nodes.  We use MPI~\cite{2016arXiv160302339V,mpi1,mpi2}, which can use high performance interconnects, like InfiniBand, natively, making it suitable for use with supercomputers.

The solver has to be recreated after each apoptosis, but this is not a computationally expensive procedure.  Thanks to the use of the Python interface, the data does not need to be loaded every time, which leaves the memory requirements approximately the same as if one solver is used continuously, and cuts the time to recreate the solver down to a negligable factor compared to training time.

\section{Performance Evaluation}
\label{sec:exp}
In this section, we present a detailed evaluation of the proposed
heuristics on two InfiniBand clusters --  one connected with Intel
Haswell CPUs and other connected with nVIDIA Tesla K40m GPUs.

\subsection{Hardware and Software Details}

\subsubsection{Hardware}
Our Testbed consists of 27 compute nodes, where
each compute node has two sockets, and each socket is 10-core
Intel(R) Xeon(R) CPU E5-2680 v2 @ 2.80GHz. Each compute node is
connected with 750 GB of main memory, and InfiniBand QDR interconnect.
Six compute nodes are also connected with nVIDIA Tesla k40m GPUs. We
refer to our multi-core cluster as {\em CPU cluster} and other testbed
as {\em GPU cluster}.

\subsubsection{Software}
We use OpenMPI v1.8.3 for performance evaluation
with Intel compiler 16.0.1. For using GPUs, we use cuDNN v3, CUDA
v7.0.28, and a version of Caffe modified to work with MPI.

\subsection{Datasets}

We consider several well studied datasets for performance evaluation.
Our primary large datasets are Higgs Boson~\cite{sadowski:nips14}
classification dataset (11M samples) and well studied
ImageNet~\cite{imagenet} classification datasets ($\approx$1.3M images).
The evaluation also includes Handwritten Digits recognition (MNIST) to establish a baseline.

\subsection{MNIST}
MNIST is a well studied dataset in literature. We use MNIST as one of
the prominent datasets for studying the impact of a large combination of
heuristics proposed in section~\ref{sec:apopspace}. We also study the
scaling effect on CPU and GPU cluster. For scaling, we utilize two main
changes: 1) A single process per and several threads per node for better
memory utilization and lesser replication of model 2) Higher learning
rate (0.1) to mitigate the effect of averaging the weights across
compute nodes. 

Figures~\ref{fig:mnistperf} and~\ref{fig:mnistgpu} show the results for
CPU and GPU clusters respectively. In Figure~\ref{fig:mnistperf}, we
observe an overall speedup of {\bf 3.2x} in comparison to default
algorithm with Normal apoptosis -- with no loss of accuracy and {\bf
11x} reduction in parameters. The speedup and parameter reduction for
aggressive apoptosis is higher, but it leads to a loss of more than 5\%
accuracy -- so we do not consider it as a viable alternative. We observe
an accuracy of 96.5-97\% accuracy for each of these executions (except
aggressive) -- which matches with well published literature for DNN and
MNIST. For evaluation with CNN, we use our GPU cluster as shown in
Figure~\ref{fig:mnistgpu}. We use 6 GPUs and compare the speedup and
parameter reduction. We observe a 3x reduction in parameters for normal
apoptosis, accuracy $\approx$ 99.2\% -- which is state of the art for
CNN executions. 

{\em Using MNIST as a reference set, for rest of the evaluation, we use
quarter-life with logarithmic subsequent apoptosis, and fixed degree of
apoptosis. Unless specified otherwise, we use normal factor (1.75) for
apoptosis.}

\begin{figure}[htbp]
\centering
\includegraphics[width=\columnwidth]{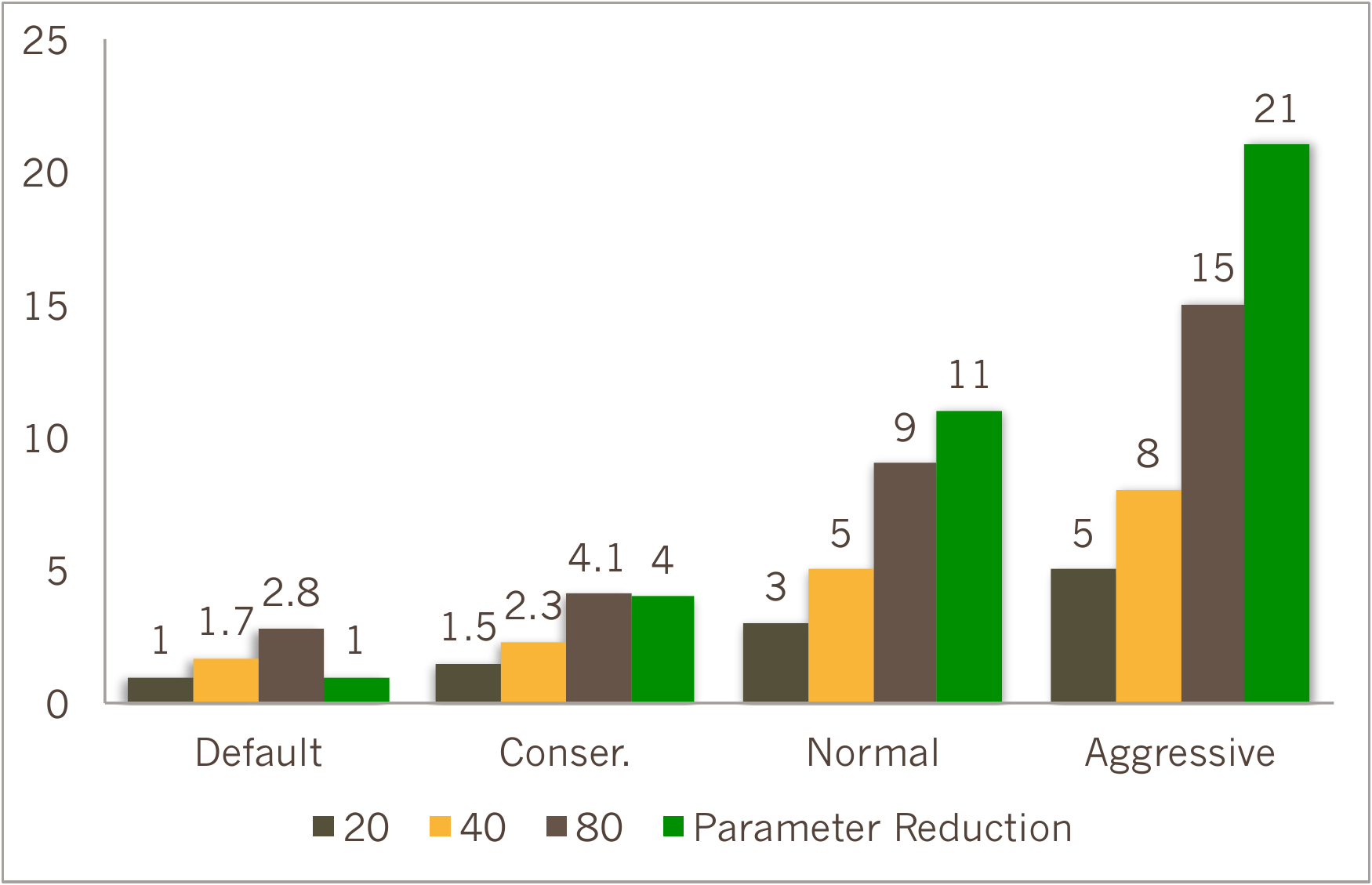}
\caption{Relative Speedup and Parameter Reduction 
to Default using DNN with \texttt{512x512} network with MNIST}
\label{fig:mnistperf}
\end{figure}
\begin{figure}[htbp]
\centering
\includegraphics[width=\columnwidth]{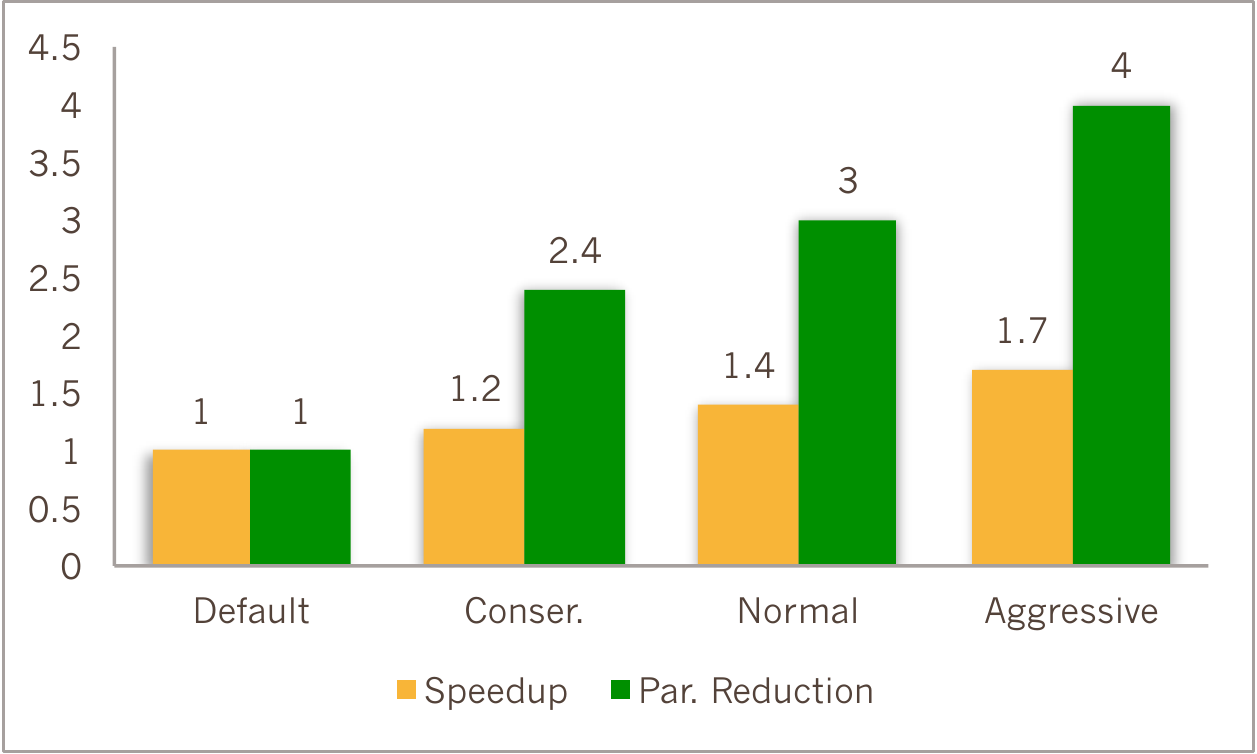}
\caption{Relative Speedup and Parameter Reduction using CNN with \texttt{32x64 Conv.
Layers and 256 Fully connected layer} network with MNIST and 6 GPUs}
\label{fig:mnistgpu}
\end{figure}

\subsection{Higgs Boson Particle Classification}
The Higgs Boson particle classification dataset is a critical dataset used for model generation
and discovery of exotic particles. 
Sadowski {\em
et al.} published and studied the dataset with Deep Learning algorithms
using a three layer DNN (\texttt {500x500x500}) network with neuron
dropout~\cite{sadowski:nips14}. Our objective with this workload is
two-fold: 1) Reduce the training time to learn the model, while
conducting neuron apoptosis 2) maintaining -- and possibly improving --
accuracy (measured using area under curve (AUC), which is the probability
that a randomly selected positive sample will be rated higher than
a randomly selected negative example), as suggested by
Sadowski {\em et al.}. For this purpose, we start with a bigger network
to possibly improve accuracy -- while utilizing apoptosis to remove
redundant neurons. We execute this dataset using up to 540 cores.

Figure~\ref{fig:higgs_speedup} shows the results. With aggressive
apoptosis, we are able to observe a speedup of {\bf 6x} -- while no
reduction in Area Under Curve (AUC). We match the AUC reported by
Sadowski {\em et al.}~\cite{sadowski:nips14}, while providing a huge
speedup.

\begin{figure}[htbp]
\centering
\includegraphics[width=\columnwidth]{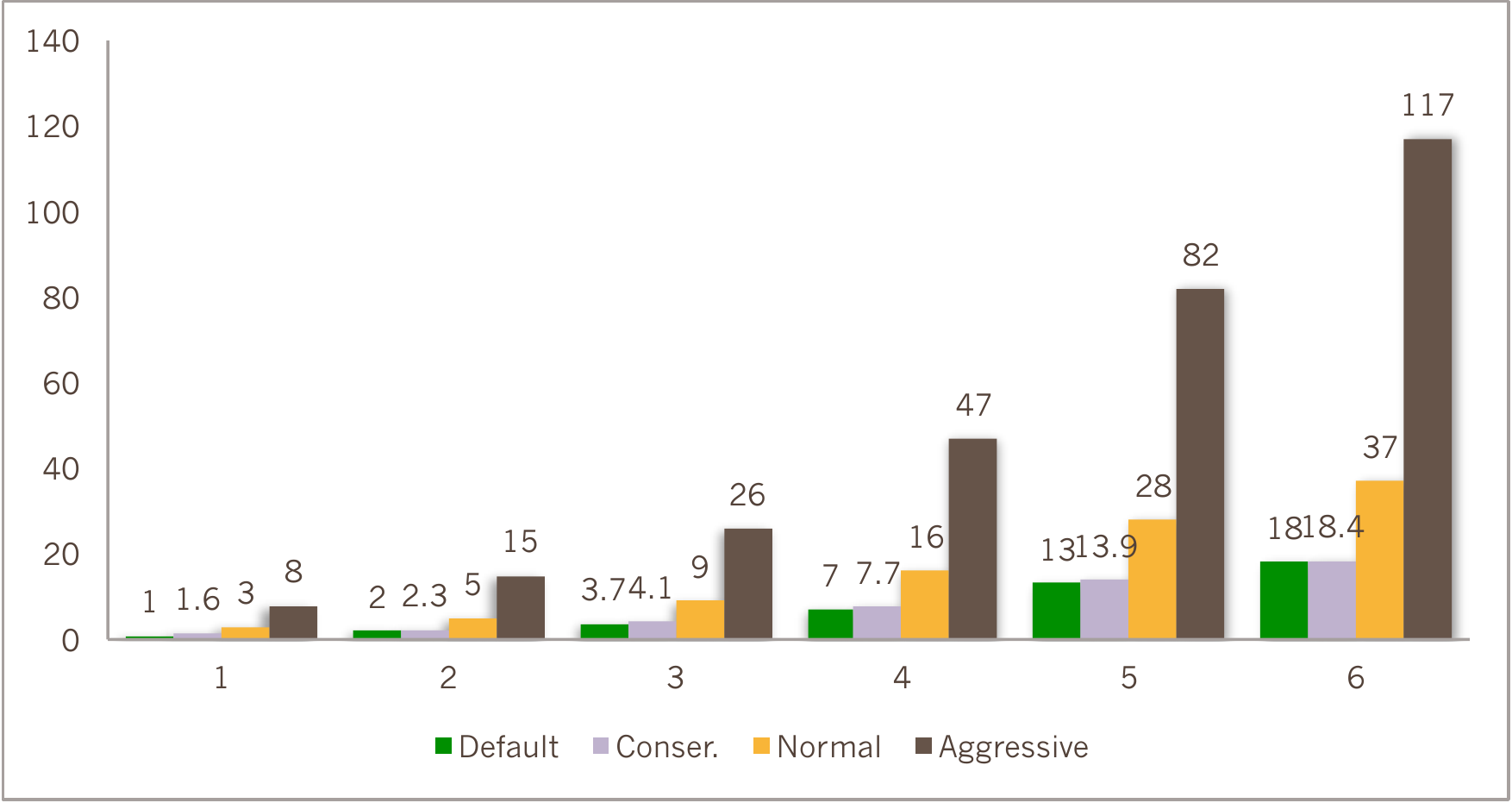}
\caption{Higgs Boson Relative Performance to 1 Node using up to 540
cores (One node has 20 cores) with \texttt{512x512x512} network
}
\label{fig:higgs_speedup} 
\end{figure}

\subsubsection{Science Results with Neuron Apoptosis}
The peak AUC reported by Sadowski {\em et al.} is 0.88/1. However, we
wanted to understand whether it is possible to create a neural network
topology -- which can provide better prediction for a scientist for
separating Higgs Boson particle from other particles. We used a deeper
network than ever considered before with Higgs by using a
\texttt{512x512x512x512} network. Being a 4 layer network, we used
autoencoders for learning weights layer by layer, conducted training on
the output layer and compared the
results for apoptosis and the default algorithm. 

Our evaluation indicates major contribution to the field: 1) Using
deeper networks, our AUC is {\bf 0.94/1} -- which is 6 percentage points better than
previously published result 2) With proposed apoptosis, we reduce number
of parameters by {\bf 3x} in comparison to the parameters in the best
known AUC for Higgs Boson particle classification -- while achieving a
{\bf speedup of 2.44x} in comparison to the no apoptosis algorithm.

\subsection{ImageNet - ILSVRC12 Results}
In this section, we present results for ImageNet Large Scale
Visualization Research Challenge (ILSVRC12). The objective of this
evaluation is to compare the accuracy reported by several neural
networks -- including DNN and CNN networks -- after a fixed training
time provided to each of the approaches. 
ILSVRC12 consists of 1.3M images, each of which is labeled in one of the
1000 categories. The ultimate objective of the challenge is to improve
the classification accuracy -- or reduce the time to solution to achieve
a given accuracy.
We primarily use GPU cluster for ImageNet evaluation -- especially since
cuDNN is heavily optimized in comparison to CPU based implementation. 

During this investigation, we discovered that, although the standard method of
training AlexNet~\cite{NIPS2012_4824} calls for 360000 iterations through the dataset,
if we use an exponentially decaying learning rate with multiplicative factor 0.999964, the
model converges after only 60000 iterations.  We used this ``quick'' AlexNet solver in the
evaluation below, along with optimization by using as many as 8 GPUs.

For each non-AlexNet execution, we provide a training time of eight hours using 6
GPUs. We observed the following points: 1) DNN provided significant
reduction in parameters, with negligible accuracy loss, while providing
speedup 2) With CNN, the primary apoptosis is observed at the
conjunction of the convolution and fully connected layers. However, most
of the time is spent in convolutions, and hence the speedup is lesser
than the DNN networks.

\begin{table}[t]
		\caption{ImagaNet Evaluation. First column shows the Deep
		Learning algorithm used; second shows the sizes of hidden
		layers; third the relative reduction in parameters; fourth
		change in accuracy and fifth the speedup}
		\label{imagenet-results}
		\vskip 0.15in
		\begin{center}
				\begin{small}
						\begin{sc}
								\begin{tabular}{lcccccr}
										\hline
										
										Algo & Network
						 & Pa. Red.
						 & Acc. (\%) & Speedup \\
						\hline
						DNN     & 2048,2048 & 27x & $ -0.7$ & 2.1x \\
						DNN     & 4096 & 34x & $-0.5$ & 2.3x
						 \\  
						CNN     & 64,128 and 2048 & 18x & $-0.3$ & 1.2x
						\\  
						\hline
				\end{tabular}
		\end{sc}
				\end{small}
		\end{center}
		\vskip -0.1in
\end{table} 

For AlexNet, we trained using our improved solver for various levels of aggressiveness of Apoptosis.  For this network, the results were mixed.  For most apoptosis factors, either virtually no ($\leq1\%$) of parameters were removed, or else the vast majority ($\geq 99.9\%$) were, and so obtained either no speedup or else an extreme loss of accuracy.  For factor $1.35$, however, $72.48\%$ of the parameters were removed, leading to a speedup of 2.6x/iteration.  However, while this reduced model recovered the accuracy from before apoptosis, but did not, during the life of the quick solver, converge to the usual test accuracy of $54\%$.

\subsection{Discussion}

There are a few important observation from the previous sections: 1) In
many cases, adaptive apoptosis results in significant reduction of
parameters and improvement in training time 2) In case of Higgs Boson
dataset, we observed that deeper networks -- with lesser parameters than
the original network -- produced better results. We also observed a
conical shape with reducing neurons per layer (starting from input layer
and ending at the output layer) in the neural network topology after
apoptosis. This can be explained by the fact that number of features is
much higher than the number of classes. 

An important discussion is the guidance for future users of this
research. An advantage of the proposed methodology is that a user may
start with a very large network to build models for its dataset, and
expect that the apoptosis would remove redundant neurons -- while
gaining speedup, and possibly accuracy, especially if the stopping
criteria is fixed time/epochs. This is attractive for novice
and advanced users alike, as there is little guidance on
specification of neural network topology. The proposed approach
generates a much smaller, near optimal topology, which would be
sufficient for the user.

\section{Conclusions}
\label{sec:conclusions}
In this paper, we have presented novel techniques to adaptively remove
redundant neurons ({\em neuron apoptosis}) 
for accelerating Deep Learning algorithms (Deep
Neural Networks, Convolutional Neural Networks and Autoencoders) --
during the training phase itself on large scale systems. The proposed
techniques are in sharp contrast with existing
approaches~\cite{han:nips15}, which require
a {\em re-training phase} for removing the weights which do not
contribute -- resulting in a significant slowdown (2.5x reported by Han
{\em et al}~\cite{han:nips15}) of the training time. 

Our contributions in this paper include novel heuristics for deciding on
initial apoptosis, subsequent apoptosis and degree of apoptosis during
the training phase itself. We provide theoretical underpinnings to
study the apoptosis for several neuron types, consider apoptosis
for input/output synapses for several neuron types and provide proofs on bounding accuracy
loss with apoptosis. We implement our proposed heuristics using Caffe, by extending it to use MPI
and allow multi-node training. We evaluate our
proposed heuristics with several large datasets including Higgs Boson
particle dataset, ImageNet classification and MNIST
handwritten digit recognition. We use two clusters -- one connected with
Intel Haswell CPU and InfiniBand QDR and other connected with nVIDIA
GPUs and InfiniBand. Our evaluation indicates significant improvement in
multiple dimensions -- including a reduction in parameters by  30x,
 and 2-3x speedup in time
comparison to no apoptosis implementation. A major contribution of our
paper is also an improvement in classification accuracy for Higgs Boson
particle dataset, by using apoptosis on a Deeper network than published
architecture. We are able to improve the Area Under Curve (AUC), by 6
percentage points from best known result {\bf 0.88/1 to 0.94/1}, while
reducing the number of parameters by {\bf 3x} in comparison to best
result producing network~\cite{sadowski:nips14}  and achieving a {\bf
2.44x} speedup.

\bibliographystyle{IEEEtran}

\bibliography{apoptosis,vishnu}

\end{document}